\documentclass[sigconf]{acmart}
\AtBeginDocument{%
  }

\setcopyright{acmlicensed}
\copyrightyear{2018}
\acmYear{2018}
\acmDOI{XXXXXXX.XXXXXXX}
\acmConference[Conference acronym 'XX]{Make sure to enter the correct
  conference title from your rights confirmation email}{June 03--05,
  2018}{Woodstock, NY}
\acmISBN{978-1-4503-XXXX-X/2018/06}

\usepackage{enumitem}
\usepackage[ruled,linesnumbered]{algorithm2e}
\newtheorem*{theorem*}{Theorem}
\usepackage{listings}
\usepackage{multirow}

\usepackage{tikz}
\usepackage{amsmath}
\usetikzlibrary{arrows.meta, shapes.geometric, positioning, fit, backgrounds, calc}
\definecolor{clrblue}{RGB}{74,144,226}
\definecolor{clrorange}{RGB}{245,166,35}
\definecolor{clrgreen}{RGB}{184,233,134}
\definecolor{clrgray}{RGB}{155,155,155}

\begin{document}

\title{Serving the Long Tail: Training-Free LLM Candidate Generation
  for Vacation Rental Marketplaces}
 
\author{Syed Mohammed Arshad Zaidi}
\email{syzaidi@expediagroup.com}
\affiliation{%
  \institution{Expedia Group}
  \city{Austin}
  \state{Texas}
  \country{USA}
}
 
\author{Eric Rincon}
\email{erincon@expediagroup.com}
\affiliation{%
  \institution{Expedia Group}
  \city{Austin}
  \state{Texas}
  \country{USA}
}
 
\author{Shayan Hassantabar}
\email{shassantabar@expediagroup.com}
\affiliation{%
  \institution{Expedia Group}
  \city{Austin}
  \state{Texas}
  \country{USA}
}
 
\renewcommand{\shortauthors}{Zaidi et al.}

\begin{abstract}
Vacation rental marketplaces face a structural imbalance on the supply
side: a small fraction of properties receive most user interactions,
while the long tail of new, niche, and seasonal listings generates
too little behavioral signal for collaborative filtering to serve
effectively. At Vrbo, item-based k-nearest neighbors (IBKNN) is a core candidate generation channel, but leaves tens of thousands of properties with no candidates and produces weak neighborhoods for sparsely interacted ones. We present a training-free, LLM-based
candidate generation pipeline that complements IBKNN using static
property metadata alone. An off-the-shelf LLM synthesizes diverse
semantic queries per property, a pre-trained text encoder embeds
them, and an approximate nearest-neighbor index retrieves candidates
from an 11.7M-property catalog. A \emph{Union fusion} strategy
merges these with IBKNN while preserving the behavioral channel's
ordering, guaranteeing no degradation on well-served properties, and
a downstream learning-to-rank model re-scores the fused pool.
Evaluated on 1.6M focal properties, the system extends candidate
coverage to tens of thousands of properties IBKNN cannot reach,
delivers its largest gains on the long-tail segment where behavioral
methods are weakest, and matches or beats IBKNN at every $K$ on
shared properties. A downstream learning-to-rank stage further lifts
the fused pool, yielding a complete candidate generation and
re-ranking stack that serves the long tail without regressing
well-served properties. We additionally show that Union fusion collapses the recall gap between a 3B open-weights LLM and frontier API-based models from 27--46\% to under 1\%, supporting self-hosted small-model deployment at marketplace catalog scale.
\end{abstract}

\begin{CCSXML}
<ccs2012>
 <concept>
  <concept_id>10002951.10003317.10003347.10003350</concept_id>
  <concept_desc>Information systems~Recommender systems</concept_desc>
  <concept_significance>500</concept_significance>
 </concept>
 <concept>
  <concept_id>10010147.10010257.10010293.10010319</concept_id>
  <concept_desc>Computing methodologies~Learning to rank</concept_desc>
  <concept_significance>300</concept_significance>
 </concept>
 <concept>
  <concept_id>10010147.10010178.10010179.10010181</concept_id>
  <concept_desc>Computing methodologies~Natural language generation</concept_desc>
  <concept_significance>300</concept_significance>
 </concept>
</ccs2012>
\end{CCSXML}
 
\ccsdesc[500]{Information systems~Recommender systems}
\ccsdesc[300]{Computing methodologies~Learning to rank}
\ccsdesc[300]{Computing methodologies~Natural language generation}
 
\keywords{candidate generation, large language models, dense retrieval,
  cold-start, long-tail recommendation, two-sided marketplaces,
  vacation rentals, learning to rank}
\maketitle

\section{Introduction}
Vacation rental marketplaces like Vrbo connect travelers searching
for accommodations with hosts who list their properties on the
platform. Every interaction implicates both sides: travelers expect
recommendations that help them discover relevant listings beyond
what direct search surfaces, and hosts expect their properties to
receive fair exposure regardless of listing age, seasonality, or
niche appeal. Reconciling these two expectations is the defining
challenge of marketplace recommendation, and it is resolved -- or
not -- at the candidate generation (CG) layer, the upstream
component that decides which subset of the catalog is even eligible
to be ranked and shown to any given user.
 
On Vrbo, item-to-item recommendations power surfaces such as the
``similar properties'' carousel shown alongside a focal listing.
The existing CG stack for this task combines multiple
complementary channels, with item-based k-nearest neighbors (IBKNN)
as a core channel. IBKNN scores pairs of properties by their
behavioral co-occurrence in user clickstreams, boosted by amenity
overlap. On properties with rich interaction history, it produces
strong, tightly geo-localized neighborhoods at low computational
cost. However, IBKNN inherits the fundamental limitation of collaborative-filtering: new listings and properties in emerging markets have no co-occurrence data and cannot be served at all.\  Long-tail properties with only a handful of interactions yield noisy, unreliable
neighborhoods; in our catalog, roughly 22\% of properties receive
five or fewer interactions per month, yet collectively account for
less than 1\% of total interactions.\ Finally, properties that are
semantically similar but lack direct co-view history remain
invisible to co-occurrence, even when they would be excellent
substitutes from a user's perspective. These are not merely recall
gaps for the CG layer; they translate directly into an
\emph{exposure imbalance} on the supply side, where the same
popular properties keep being recommended while the long tail is
starved of the visibility needed to bootstrap into popularity.
 
To address the aforementioned problem, our approach is to complement IBKNN with a second channel that does not depend on behavioral signal at all.\ We treat a large language model (LLM) as a \emph{query generator}: given only a property's static metadata, the LLM synthesizes a small set of diverse natural-language search queries that describe what the property offers from different perspectives.\ These queries are encoded by a pre-trained dense text encoder and dispatched against an approximate nearest-neighbor index of the full property catalog; results are aggregated and geographically re-ranked. The resulting candidates are merged with IBKNN through a \emph{union fusion} strategy that preserves the behavioral channel's ranking by construction and fills remaining slots with semantically-retrieved alternatives.\ A downstream learning-to-rank model then re-scores the fused pool.
 
\paragraph{Contributions.} This paper makes the following
contributions, aimed at practitioners building CG systems for
two-sided marketplaces:
\begin{itemize}[leftmargin=*]
  \item We describe a \textbf{training-free, metadata-driven LLM
  candidate generation pipeline}, paired with a downstream
  learning-to-rank stage, evaluated at scale (1.6M focal properties
  over an 11.7M-property catalog). The pipeline is stateless and
  applicable to any property with parseable metadata -- including
  brand-new listings with zero interaction history -- and requires
  no labeled data, no fine-tuning, and no user histories.
  \item We introduce \textbf{union fusion}, a channel-combination
  strategy with a formal recall-preservation guarantee: union recall
  $\geq$ IBKNN recall for every focal property (the listing for which
  candidates are being generated) at every $K$. This
  makes union strictly preferable to Reciprocal Rank Fusion when
  shipping a new channel alongside a mature behavioral baseline,
  and we show empirically why.
  \item We report a \textbf{per-segment analysis} showing that the
  LLM channel's lift concentrates exactly where IBKNN is weakest,
  and extends coverage to tens of thousands of cold-start properties
  that the behavioral channel cannot serve.
  \item We show that \textbf{union fusion makes the choice of
  query-generator LLM robust}: the 27--46\% standalone recall gap
  between a 3B open-weights model and frontier API-based models
  collapses to under 1\% once candidates are fused with IBKNN under
  Union. This makes self-hosted small-model deployment a defensible
  full-scale choice, avoiding the operational ceiling imposed by
  frontier API rate limits.
\end{itemize}

\section{Related Work}\label{sec:related}
Item-based collaborative filtering has been a staple of large-scale
recommender systems since its introduction by Sarwar
et~al.~\cite{sarwar2001item} and its early deployment at scale by
Linden et~al.~\cite{linden2003amazon} at Amazon. The approach
remains competitive for candidate generation in modern stacks
because it is cheap to compute, interpretable, and produces tight
neighborhoods on well-interacted items. A long line of work has
extended co-occurrence-based CG with neural representations:
Item2Vec~\cite{barkan2016item2vec} adapted the skip-gram framework
to learn dense item embeddings from co-occurrence data; the
sampling-bias-corrected two-tower retrieval model of Yi
et~al.~\cite{yi2019sampling} established the dual-encoder paradigm
for large-corpus item recommendation; and PinSage
\cite{ying2018pinsage} demonstrated graph convolutional networks at
web scale for item-to-item retrieval at Pinterest. Despite their
methodological diversity, these approaches share a common dependency
on user-item interaction signal, and all degrade -- to varying
degrees -- on long-tail and cold-start items.
 
Large language models have more recently been applied to
recommendation along several axes. TALLRec~\cite{bao2023tallrec}
proposed an efficient LoRA-based framework to align pre-trained LLMs
with the recommendation task, demonstrating that lightweight tuning
can produce competitive sequential recommenders. A parallel line of
work pursued generative retrieval: TIGER~\cite{rajput2023tiger}
replaced the standard embed-then-nearest-neighbor pipeline with an
autoregressive transformer that decodes semantic identifiers
directly. Both directions, however, require model training --
either parameter-efficient fine-tuning or full encoder-decoder
pre-training -- and do not natively integrate with existing
dense-retrieval infrastructure. A third, more recent direction
sidesteps both costs by using LLMs purely as \emph{query generators}.
Kim et~al.~\cite{kim2025querec} proposed QueRec, which prompts an
LLM to generate personalized search queries from a user's
interaction history and retrieves items through dense search over
the query embeddings, with no LLM fine-tuning required.
 
Cold-start mitigation through content and side information has been
studied extensively in the lodging domain in particular.
Hotel2vec~\cite{sadeghian2019hotel2vec} is the closest prior work in
spirit to ours: it fuses click-based hotel embeddings with
structured attribute encoders (location, amenities, star rating) to
improve recommendation quality, with explicit gains on cold-start
hotels that lack click history. The broader pattern is consistent:
content-based and hybrid approaches help where behavioral signal is
sparse, but typically require training a dedicated encoder on
paired interaction-attribute data.
 
Our work differs from each thread in a specific way: unlike
behavioral CG channels, we require no interaction data; unlike
TALLRec~\cite{bao2023tallrec} and TIGER~\cite{rajput2023tiger},
we have no trainable components in the CG stage; we share the
query-then-retrieve paradigm with QueRec~\cite{kim2025querec} but
target \emph{item-to-item} CG from static metadata rather than
user-history-conditioned sequential recommendation; and unlike
Hotel2vec~\cite{sadeghian2019hotel2vec}, we let a general-purpose
LLM verbalize metadata into queries rather than training a
dedicated encoder.

\section{Methodology}\label{sec:method}
\begin{figure*}[t]
\centering
\resizebox{\linewidth}{!}{%
\begin{tikzpicture}[
  font=\footnotesize,
  >=Stealth,
  chip/.style={
    rectangle, rounded corners=3pt,
    text centered,
    fill=clrblue!15, draw=clrblue!70, thin,
    font=\scriptsize, inner sep=2pt
  },
  stage/.style={
    rectangle, rounded corners=8pt,
    minimum width=1.8cm, minimum height=1.1cm,
    text centered, align=center,
    fill=clrblue!40, draw=clrblue!85, thick,
    font=\footnotesize, inner sep=3pt
  },
  stageemph/.style={
    rectangle, rounded corners=8pt,
    minimum width=1.8cm, minimum height=1.1cm,
    text centered, align=center,
    fill=clrblue!65, draw=clrblue!95, thick,
    font=\footnotesize, inner sep=3pt
  },
  fusion/.style={
    rectangle, rounded corners=8pt,
    minimum width=1.8cm, minimum height=1.1cm,
    text centered, align=center,
    fill=clrorange!40, draw=clrorange!85, thick,
    font=\footnotesize, inner sep=3pt
  },
  outbox/.style={
    rectangle, rounded corners=8pt,
    minimum width=1.8cm, minimum height=1.1cm,
    text centered, align=center,
    fill=clrgreen!60, draw=clrgreen!80!black, thick,
    font=\footnotesize, inner sep=3pt
  },
  arr/.style={->, line width=0.75pt, color=black},
]

\node[chip, minimum width=1.65cm] (title) at (0, 2.55) {Title};
\node[chip, minimum width=1.65cm] (desc)  at (0, 2.05) {Description};
\node[chip, minimum width=1.65cm] (loc)   at (0, 1.55) {City, State};
\node[chip, minimum width=1.65cm] (cap)   at (0, 1.05) {Capacity};
\node[chip, minimum width=1.65cm] (amen)  at (0, 0.55) {Amenities};

\begin{scope}[on background layer]
  \node[draw=clrblue!60, fill=clrblue!8, rounded corners=6pt,
        fit=(title)(amen), inner sep=6pt,
        label={[font=\scriptsize, text=black, yshift=-2pt]above:Property metadata}
       ] (mdbox) {};
\end{scope}

\node[chip, minimum width=1.75cm, text width=1.7cm, align=center, minimum height=0.9cm]
  (clicks) at (0, -1.8) {User clickstream\\ (co-occurrences)};

\node[stageemph] (llm) at (2.9, 1.55) {LLM\\ query gen.};

\node[chip, minimum width=2.5cm, text width=2.35cm, align=center, minimum height=0.55cm,
      draw=none, fill=none, text=white]
  (q1ph) at (5.8, 2.35) {``Austin lakefront\\ cabin with hot tub''};
\node[chip, minimum width=2.5cm, align=center, minimum height=0.45cm,
      draw=none, fill=none, text=white]
  (q3ph) at (5.8, 0.8)  {$\ldots$ $Q{=}12$ queries};

\node[draw=clrblue!90, fill=white, rounded corners=6pt,
      line width=1pt,
      fit=(q1ph)(q3ph), inner sep=7pt
     ] (qbox) {};

\node[chip, minimum width=2.5cm, text width=2.35cm, align=center, minimum height=0.55cm]
  (q1) at (5.8, 2.35) {``Austin lakefront\\ cabin with hot tub''};
\node[chip, minimum width=2.5cm, text width=2.35cm, align=center, minimum height=0.55cm]
  (q2) at (5.8, 1.55) {``Austin family cabin\\ near lake''};
\node[chip, minimum width=2.5cm, align=center, minimum height=0.45cm]
  (q3) at (5.8, 0.8)  {$\ldots$ $Q{=}12$ queries};

\node[stage]     (enc)   at (8.65,  1.55)  {E5\\ encoder};
\node[stage]     (faiss) at (10.8,  1.55)  {FAISS\\ 11.7M};
\node[stage]     (rrf)   at (12.95, 1.55)  {RRF +\\ geo re-rank};
\node[stage, fill=clrblue!25, minimum width=1.3cm]
                 (cllm)  at (14.95, 1.55)  {$C_{\text{LLM}}$};

\node[stage]     (ibknn) at (2.9, -1.8) {IBKNN\\ co-occurrence};
\node[stage, fill=clrblue!25, minimum width=1.3cm]
                 (cib)   at (14.95, -1.8) {$C_{\text{IB}}$};

\node[fusion]  (union) at (17.55, -0.1) {Union\\ fusion};
\node[stage, fill=clrorange!20]
               (pool)  at (19.85, -0.1) {Top-1000\\ pool};
\node[stageemph, fill=clrorange!55, draw=clrorange!90]
               (lgb)   at (22.1, -0.1)  {LightGBM\\ ranker};

\node[outbox] (topk) at (24.35, -0.1) {Top-$K$\\ results};

\begin{scope}[on background layer]
  \node[draw=clrblue!80, dashed, line width=0.6pt,
        fill=clrblue!4,
        rounded corners=4pt,
        fit=(llm)(qbox)(cllm),
        inner sep=8pt,
        label={[font=\footnotesize, text=black, yshift=-2pt]above:LLM channel (training-free)}
       ] (llmgroup) {};

  \node[draw=clrblue!80, dashed, line width=0.6pt,
        fill=clrblue!4,
        rounded corners=4pt,
        fit=(ibknn)(cib),
        inner sep=8pt,
        label={[font=\footnotesize, text=black, yshift=2pt]below:Behavioral channel (IBKNN)}
       ] (behgroup) {};

  \node[draw=clrorange!85, dashed, line width=0.6pt,
        fill=clrorange!5,
        rounded corners=4pt,
        fit=(union)(lgb),
        inner sep=8pt,
        label={[font=\footnotesize, text=black, yshift=-2pt]above:Fusion \& Re-ranking}
       ] (fusegroup) {};
\end{scope}

\draw[arr] (llm.east)   -- (qbox.west |- llm.east);
\draw[arr] (qbox.east |- enc.west) -- (enc.west);
\draw[arr] (enc.east)   -- (faiss.west);
\draw[arr] (faiss.east) -- (rrf.west);
\draw[arr] (rrf.east)   -- (cllm.west);

\draw[arr] (mdbox.east) -- (llmgroup.west |- llm.west) -- (llm.west);

\draw[arr] (clicks.east) -- (behgroup.west |- ibknn.west) -- (ibknn.west);
\draw[arr] (ibknn.east)  -- (cib.west);

\draw[arr] (cllm.east) -- (llmgroup.east |- cllm.east)
                      -- ($(llmgroup.east |- cllm.east)+(0.3,0)$)
                      |- (fusegroup.west |- union.west)
                      -- (union.west);
\draw[arr] (cib.east)  -- (behgroup.east |- cib.east)
                      -- ($(behgroup.east |- cib.east)+(0.3,0)$)
                      |- (fusegroup.west |- union.west)
                      -- (union.west);

\draw[arr] (union.east) -- (pool.west);
\draw[arr] (pool.east)  -- (lgb.west);

\draw[arr] (lgb.east) -- (fusegroup.east |- lgb.east) -- (topk.west);

\end{tikzpicture}%
}%
\Description{A pipeline diagram showing the training-free LLM candidate generation architecture. On the top row, a property metadata box (containing fields for Title, Description, City/State, Capacity, and Amenities) feeds into an LLM query generator, which produces Q=12 natural-language search queries. Two example queries are shown: ``Austin lakefront cabin with hot tub'' and ``Austin family cabin near lake''. The queries then pass through an E5 encoder, a FAISS index over 11.7M properties, and RRF plus geographic re-ranking to produce the LLM-channel candidate list C\_LLM. On the bottom row, a user clickstream feeds into an IBKNN co-occurrence module producing candidate list C\_IB. Both candidate lists feed into Union fusion, which produces a Top-1000 pool, passed to a LightGBM ranker, producing the final Top-K results. The LLM channel (top), Behavioral channel (bottom), and Fusion \& Re-ranking block (right) are each shown grouped in dashed labeled boxes.}
\caption{End-to-end candidate generation and re-ranking pipeline.
The LLM channel (top) converts static property metadata into $Q{=}12$
natural-language queries via an instruction-tuned LLM, encodes them
with E5, and retrieves candidates from the 11.7M-property FAISS index
with RRF + geographic re-ranking. The behavioral channel (bottom)
produces $C_{\text{IB}}$ from IBKNN on user clickstreams. Union fusion
(\S\ref{sec:method-union}) merges the two lists while preserving
IBKNN's ordering, and a downstream LightGBM ranker re-scores the
fused top-1000 pool.}
\label{fig:pipeline}
\end{figure*}
\subsection{Pipeline Overview}\label{sec:method-overview}
The LLM-based candidate generation pipeline operates entirely on
static property metadata and produces a ranked list of candidate
properties for any focal property in the catalog.\ It consists of four stages (Figure~\ref{fig:pipeline}), applied per focal property:
\begin{enumerate}[leftmargin=*]
\item \textbf{Query generation.} An off-the-shelf instruction-tuned
  LLM is prompted with the focal property's metadata and produces a
  small set of diverse natural-language search queries describing
  what the property offers from different angles
  (\S\ref{sec:method-querygen}). Instruction tuning is needed so the
  model reliably follows the prompt's length and format rules.
  \item \textbf{Query encoding.} Each generated query is encoded into
  a dense vector by a pre-trained text encoder. The catalog of
  11.7M properties has been pre-encoded once, offline, into the same
  embedding space.
  \item \textbf{Approximate nearest-neighbor retrieval.} Each query
  embedding is dispatched against a FAISS index over the property
  catalog, returning the top-$N$ nearest properties per query.
  \item \textbf{Aggregation and location re-ranking.} Per-query
  result lists are aggregated via Reciprocal Rank Fusion~\cite{cormack2009rrf} to produce
  a single ranked list of LLM-channel candidates, which is then
  re-ranked by a weighted combination of semantic similarity and
  geographic distance from the focal property.
\end{enumerate}

The output of the pipeline is the LLM channel's ranked candidate
list, $C_{\text{LLM}}$, for each focal property. This list is then
combined with the IBKNN channel's list $C_{\text{IB}}$ via the
Union fusion strategy formalized in \S\ref{sec:method-union}, and
the fused pool is re-scored by a downstream learning-to-rank model
(\S\ref{sec:method-ranker}). Because the pipeline depends only on
metadata, it is stateless, parallel across focal
properties, and applicable to any property in the catalog including
those with zero interaction history (\S\ref{sec:method-coldstart}).

\subsection{Query Generation from Property Metadata}\label{sec:method-querygen}
For each focal property, we prompt an off-the-shelf
instruction-following LLM with a compact serialization of the
property's structured metadata and ask it to produce $Q$ diverse
natural-language search queries. We perform no fine-tuning of our own; the pipeline relies only on the instruction-following
capability the model already has from its provider's training.
The LLM is treated as an interchangeable component of the
pipeline: any model capable of following the prompt's length and
format constraints can serve this role, and we evaluate several
choices in \S\ref{sec:results-llm-choice} to quantify how much
the specific model matters. 

\paragraph{Metadata serialization.} Each property is rendered as a
short, line-separated text block containing, in order: the listing
title, the description (HTML-stripped and truncated to 300
characters), the structure type and administrative location (city,
state, country), the market or destination area, the property's
capacity (bedrooms, bathrooms, sleeps), up to 15 amenities as a
comma-separated list, and the average rating with review count.
Fields that are missing or empty are omitted rather than padded,
so properties with sparse metadata still produce well-formed
prompts. The same serialization is used by the property-catalog
encoder (\S\ref{sec:method-overview}, Stage~2), which ensures that
the text seen by the LLM at query time is structurally aligned
with the text used to build the FAISS index.

\paragraph{Prompt design.} The prompt (Fig.~\ref{fig:prompt})
instructs the LLM to generate $Q$ search queries that a traveler
might plausibly use to find the focal property \emph{or nearby
alternatives}, with three explicit constraints: (i) every query
must include the city or region name, which anchors dense
retrieval geographically -- without this constraint, queries
drift toward catalog-wide descriptive matches and lose geographic
relevance; (ii) each query should highlight a different facet
of the property, spanning location-plus-type,
location-plus-amenity, and location-plus-experience variants;
and (iii) queries must be natural-language search phrases between
five and twelve words, without prices (e.g., ``under \$200/night''), numbering (e.g., ``1.'', ``2.'', \ldots), or meta-commentary (e.g., ``Here are some queries:''). At runtime, \texttt{\{num\_queries\}} is replaced
with $Q=12$, \texttt{\{focal\_text\}} with the serialized metadata,
and \texttt{\{history\_section\}} is empty in our item-to-item
setting.

\begin{figure}[h]
\begin{lstlisting}[
  basicstyle=\ttfamily\scriptsize,
  frame=single,
  breaklines=true,
  breakindent=0pt,
  xleftmargin=0pt,
  framesep=3pt,
]
Generate {num_queries} search queries that a traveler
would use to find this vacation rental or similar
alternatives.

Property:
{focal_text}

{history_section}Rules:
- EVERY query MUST include the city or region name
- Write natural search phrases (e.g., "Austin lakefront
  cabin with hot tub")
- Each query should highlight a different feature:
  location + property type, location + amenities,
  location + experience
- Keep each query between 5-12 words
- Do NOT include prices or dollar amounts
- Do NOT number the queries

Output ONLY {num_queries} queries, one per line.
\end{lstlisting}
\caption{Prompt template used to generate $Q=12$ diverse search
queries from focal property metadata.
\texttt{\{num\_queries\}}, \texttt{\{focal\_text\}}, and
\texttt{\{history\_section\}} are substituted at runtime.}
\label{fig:prompt}
\end{figure}


\paragraph{Why multiple queries rather than a single embedding.}
Generating $Q$ queries per property rather than embedding metadata
into a single dense vector is deliberate: a single embedding
collapses all facets into one point, while multi-query generation
forces the LLM to surface different facets as separate queries,
each retrieving a different slice of the catalog. The diversity
enters at the query level, and the downstream aggregation step
(\S\ref{sec:method-overview}, Stage~4) combines per-query results
via Reciprocal Rank Fusion.

\paragraph{Output parsing and validation.} Each candidate query
passes through a lightweight validation filter that strips
enumeration artifacts, rejects queries outside the 10--200
character range, removes queries containing price patterns or
AI-refusal phrases, and de-duplicates near-identical variants.
When the validated set contains fewer than $Q$ queries -- typically
because the LLM produced malformed output or the property's
metadata was extremely sparse -- the pipeline pads the remainder
with a deterministic fallback queries assembled directly from the property's metadata fields, each combining the location with a structural attribute (e.g., ``Austin cabin rental'', ``Austin cabin with hot tub'').


\subsection{Union Fusion with IBKNN}\label{sec:method-union}
Let $C_{\text{IB}} = (i_1, i_2, \ldots, i_{|C_{\text{IB}}|})$ denote
the ordered list of candidates returned by IBKNN for a given focal
property, and let
$C_{\text{LLM}} = (j_1, j_2, \ldots, j_{|C_{\text{LLM}}|})$ denote
the ordered list returned by the LLM channel after Stage 4 of the
pipeline. We define the Union fusion of these two lists at cutoff
$K$ procedurally as shown in Alg.~\ref{alg:union}.

\begin{algorithm}[t]
\DontPrintSemicolon
\KwIn{ordered lists $C_{\text{IB}}$, $C_{\text{LLM}}$;
  target output size $K$}
\KwOut{fused top-$K$ candidate list}
$\text{result} \leftarrow C_{\text{IB}}$
  \tcp*{copy IBKNN, preserve order}
$\text{seen} \leftarrow$ set of property ids in $C_{\text{IB}}$\;
\ForEach{$c \in C_{\text{LLM}}$}{
  \lIf{$|\text{result}| \geq K$}{\textbf{break}}
  \If{$c.\mathrm{id} \notin \text{seen}$}{
    append $c$ to $\text{result}$\;
    add $c.\mathrm{id}$ to $\text{seen}$\;
  }
}
\Return $\text{result}[:K]$\;
\caption{Union fusion of IBKNN and LLM-channel candidates.}
\label{alg:union}
\end{algorithm}

The Union output begins with IBKNN's list exactly as produced and
then extends it by walking through the LLM channel's list in
order, appending only items not already present in the IBKNN
list, stopping once the total length reaches $K$. Three
properties follow directly from the algorithm and are central to
what comes next: (i) IBKNN's ordering is preserved exactly --
no item is ever reordered, replaced, or removed; (ii) when an LLM
candidate duplicates an IBKNN candidate, the IBKNN copy is the
one retained; and (iii) the LLM channel only fills slots that
IBKNN did not reach.

\begin{theorem*}[Recall preservation]\label{thm:recall-preservation}
For every focal property $f$ and every $K \geq 1$,
\[
\mathrm{Recall}@K\bigl(\mathrm{Union}_K(C_{\text{IB}}^{(f)},
  C_{\text{LLM}}^{(f)})\bigr) \;\geq\;
\mathrm{Recall}@K\bigl(C_{\text{IB}}^{(f)}\bigr),
\]
with equality when $|C_{\text{IB}}^{(f)}| \geq K$.
\end{theorem*}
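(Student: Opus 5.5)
The plan is to show that the top-$K$ prefix of $C_{\text{IB}}^{(f)}$ appears unchanged at the head of the fused list, and then use that Recall@$K$ is monotone in the set of retrieved items. Fix a focal property $f$ and its ground-truth relevant set $R_f$ (the held-out items against which recall is measured). Take the standard definition
\[
\mathrm{Recall}@K(L) = \frac{\bigl|\mathrm{top}_K(L)\cap R_f\bigr|}{|R_f|},
\]
where $\mathrm{top}_K(L)$ is the set of items in the first $\min(K,|L|)$ positions of $L$. Because the denominator depends only on $f$, not on the list, the claim reduces to a set inclusion between the retrieved top-$K$ sets.

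\textbf{Step 1: the IBKNN prefix survives.} Write $U = \mathrm{Union}_K(C_{\text{IB}}^{(f)}, C_{\text{LLM}}^{(f)})$ and $m = |C_{\text{IB}}^{(f)}|$. Algorithm~\ref{alg:union} initializes the result to $C_{\text{IB}}^{(f)}$ and thereafter only appends; no step removes or reorders an entry. So before the final truncation, the result has $C_{\text{IB}}^{(f)}$ as a prefix. The final truncation $\text{result}[:K]$ therefore keeps exactly the first $\min(K,m)$ IBKNN items, in order, as the first $\min(K,m)$ entries of $U$. Hence $\mathrm{top}_K(C_{\text{IB}}^{(f)}) \subseteq \mathrm{top}_K(U)$.

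\textbf{Step 2: monotonicity.} Intersecting both sides with $R_f$ and dividing by $|R_f|$ gives the inequality directly.

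\textbf{Step 3: the equality case.} If $m \ge K$, the loop body's check $|\text{result}|\ge K$ fires on the first iteration, so nothing is appended. Then $U$ equals the first $K$ items of $C_{\text{IB}}^{(f)}$, the two top-$K$ sets coincide, and the recalls are equal.

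The only real obstacle is definitional rather than technical. The inequality relies on $C_{\text{IB}}^{(f)}$ containing no duplicate ids, so that its top-$K$ prefix is a set of at most $K$ items. It also relies on recall being evaluated over the list's top-$K$ positions with a list-independent denominator. I would state both assumptions explicitly. I would also remark on the degenerate case $|R_f| = 0$, where both sides are taken to be $0$ and the claim holds trivially.
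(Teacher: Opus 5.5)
Your proof is correct and matches the paper's argument: the first $\min(K,|C_{\text{IB}}^{(f)}|)$ positions of the fused list are exactly IBKNN's, which gives the inequality, and the early stop on line~4 gives equality when $|C_{\text{IB}}^{(f)}| \geq K$. Your no-duplicates caveat is unnecessary, because the two lists agree position by position on that prefix, so the inclusion of top-$K$ sets holds either way.
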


\begin{proof}
Let $m = \min(K, |C_{\text{IB}}^{(f)}|)$. By construction, the
fused list begins with the first $m$ items of $C_{\text{IB}}^{(f)}$
in their original order. Therefore, every relevant item that IBKNN
places in its top-$K$ also appears in the fused top-$K$, which
gives the inequality. When $|C_{\text{IB}}^{(f)}| \geq K$, the
algorithm's early stop on line~4 fires before any LLM candidate is
appended, so the fused list equals the first $K$ items of
$C_{\text{IB}}^{(f)}$ exactly, and equality holds.
\end{proof}




A stronger \emph{position preservation} property follows from the
same construction: since Alg.~\ref{alg:union} only appends to
$\text{result}$ and never reorders, any IBKNN item at rank $r \leq K$
occupies position $r$ in the Union output as well. This makes the theorem tight -- Union matches rather than strictly improves on IBKNN's $\mathrm{Recall}@K$ only when
IBKNN already fills all $K$ slots; otherwise, every relevant
LLM-contributed item in a previously-empty slot improves recall.


One scope note is worth making explicit: the guarantee is on the
fused candidate set, not on the final ranked output. The
downstream learning-to-rank model
(\S\ref{sec:method-ranker}) is free to reorder anything in the
fused pool, and its precision behavior is evaluated separately
from the candidate-set guarantees stated here.

\subsection{LightGBM Re-ranker}\label{sec:method-ranker}
A downstream LightGBM ranker re-scores the Union-fused top-1000
pool to produce the final ranked output. The ranker uses features
spanning rank and score priors from each upstream channel,
geographic distance, per-channel similarity signals (IBKNN
co-occurrence strength and cosine similarity in the dense
text-embedding space described in \S\ref{sec:method-overview},
Stage~2), property-level demand profiles, user-segment preferences, and focal-aware
pairwise interactions.\ The goal of the ranker is
to add incremental value on top of the fused candidate pool by
exploiting features the upstream channels do not directly model.
Detailed feature design and training choices are not the focus of
this paper.

Each (focal, candidate) training pair is assigned a graded
relevance label for the LambdaRank objective: 4 if the two
properties were booked in the same session, 1 if co-viewed, and 0
otherwise. Negatives are sampled uniformly at 5:1 against
positives, capped at 5M pairs. Property-level features (review counts, ADR,
booking volumes) use rolling 7--90-day windows snapshotted to the
training cutoff to avoid future-information leakage. The training
window is disjoint from evaluation. We train a LightGBM LambdaRank
model grouped by focal property with NDCG@\{5, 10, 50\} as the
training metric. \S\ref{sec:results-ranker} covers the evaluation results.

\subsection{Cold-Start Coverage by Construction}\label{sec:method-coldstart}
A structural property of the LLM channel is that its eligibility
criterion for serving a focal property is fundamentally different
from IBKNN's. IBKNN requires that the focal property has appeared
in user clickstreams alongside other properties; without
co-occurrence data, IBKNN returns an empty list and the property
receives no candidates. The LLM channel, by contrast, requires
only that the focal property has parseable metadata: a location,
a structure type, and enough descriptive text for the LLM to
generate queries from. Whether any user has ever interacted with
the property is irrelevant to whether candidates can be generated
for it.

This shifts cold-start coverage from a remediation problem to a
structural one. Any property the catalog admits is eligible for
the LLM channel by construction, including brand-new listings
created hours before the channel runs, properties in emerging
markets that have not yet accumulated traffic, and seasonal
inventory dormant during the IBKNN aggregation window. The LLM
channel does not need to ``learn'' how to handle these properties;
it handles them by virtue of how the eligibility is defined. We
quantify the resulting coverage gain in \S\ref{sec:results}.

\subsection{Implementation Details}\label{sec:method-impl}
We report the specific models, infrastructure, and parameters used
in the experiments reported in this work.

\paragraph{Query generator.} The full-scale pipeline uses
LLaMA-3.2-3B-Instruct \cite{llama32} as the query-generation
LLM, served via vLLM behind an OpenAI-compatible HTTP endpoint
on a single-node GPU cluster. Generation runs with temperature $T=0.4$, $Q=12$ queries per focal property, and a per-request \texttt{max\_tokens} cap of 300. The prompt (Fig.~\ref{fig:prompt}) and the parser and validation filter described in \S\ref{sec:method-querygen} are applied uniformly to the LLM's raw output.\ For the LLM-choice
ablation in \S\ref{sec:results-llm-choice}, we additionally compare
against GPT-4o and GPT-5.2 accessed through an Azure OpenAI proxy,
using the identical prompt template, sampling parameters, parser,
and validation filter; only the underlying model identity changes
across the comparison. The self-hosted model generates queries for
a 10K focal-property batch in roughly five minutes on a single GPU
node and scales to the full 1.6M focal set on the same hardware.
The frontier API endpoints, capped at 50 requests per minute, could
not be run at full catalog scale within a practical time budget,
which is why the LLM-choice ablation
(\S\ref{sec:results-llm-choice}) is restricted to a 10K subset.

\paragraph{Text encoder and FAISS index.} Property metadata and
generated queries are encoded with E5-large-v2~\cite{wang2022e5},
a 1024-dimensional sentence encoder.\ E5 is trained for asymmetric
retrieval and expects each input to be tagged by role: we prepend
the \texttt{passage:} prefix to catalog entries (the documents
being searched) and the \texttt{query:} prefix to LLM-generated
queries, as the model requires.\ The full 11.7M-property catalog is
encoded once offline with batch size 64 and L2-normalized, then
indexed with FAISS using an exact inner-product index
(\texttt{IndexFlatIP}). We use a flat index rather than an
approximate one because the catalog is small enough to fit in
GPU memory and the exact index removes approximation error as a
confounding factor in the cross-method recall comparisons. Query
retrieval fetches the top-50 candidates per query; per-query
result lists are aggregated via Reciprocal Rank Fusion with
$k=60$ (\S\ref{sec:method-overview}, Stage~4) and then geographically re-ranked with a $0.7{:}0.3$ weighting of semantic similarity and normalized haversine distance from the focal property.


\section{Experimental Setup}\label{sec:setup}
\paragraph{Dataset and catalog.} We evaluate on a Vrbo
property catalog of approximately 11.7M listings. The evaluation set consists of
1.6M focal properties drawn from this catalog, spanning the full
range of interaction volumes from cold-start listings with zero
interactions to heavily-trafficked properties with hundreds of
clickstream events per month. The focal set was selected by
filtering to properties with parseable metadata (a required
prerequisite for the LLM channel) and restricting to Vrbo brand
inventory; no other filters were applied.

\paragraph{Evaluation protocol.} Ground-truth relevance for each
focal property is derived from the held-out clickstream window
\emph{2025-09-09 to 2025-10-08}, using a co-occurrence definition:
for a focal property $f$, any property $p$ that appears in the
same user session as $f$ during the held-out window is treated
as a relevant candidate for $f$. The training window used by the
behavioral baseline (IBKNN) is \emph{2025-06-08 to 2025-09-08},
strictly disjoint from the evaluation window. We report
$\mathrm{Recall}@K$ as the primary metric, averaged uniformly
across all focal properties in the evaluation set, at
$K \in \{5, 10, 50, 100, 200, 300, 500, 1000\}$, with $K = 1000$
as the primary operating point. As a secondary metric, we report
the mean haversine distance from the focal property to its
returned candidates, averaged at $K = 5$ and $K = 10$, to capture
geographic precision -- a property that matters in marketplace
lodging where travelers typically care about location as much as
attributes.

\paragraph{Baselines.} Our primary baseline is \textbf{IBKNN}
(behavioral co-occurrence with amenity boost and 30~km
geo-prefilter), a core component of Vrbo's CG stack.
The proposed method is \textbf{Union fusion} of the LLM channel
with IBKNN (Alg.~\ref{alg:union}). For the LLM-choice
ablation in \S\ref{sec:results-llm-choice}, we additionally
compare against two non-Union fusion strategies: \textbf{LLM-only}
(no fusion) and \textbf{Reciprocal Rank Fusion} (RRF), a
rank-blending strategy that combines both channels into every
output position.


\paragraph{Evaluation variants.} For the per-segment comparisons
in \S\ref{sec:results-segment}, we use a \emph{shared-population}
protocol that restricts evaluation to the 1.5M properties
where both IBKNN and Union have candidates, providing a fair comparison that isolates retrieval quality from
coverage differences. Cold-start properties where only the LLM
channel has candidates are reported separately in
\S\ref{sec:results-coldstart}.

\paragraph{LLM-ablation subset.} The LLM-choice ablation in
\S\ref{sec:results-llm-choice} is run on a 10K focal-property
subset drawn uniformly at random (with fixed seed) from the
intersection of the 1.6M focal set and properties appearing in
the held-out clickstream window. This smaller scale is forced by
the rate-limit constraints of frontier API-based LLMs, described
in \S\ref{sec:method-impl}. LLaMA-3.2-3B's numbers on this subset
are extracted by filtering the full 1.6M run to the 10K focal
IDs, so LLaMA at 10K is the same model output as LLaMA at 1.6M,
restricted to an overlapping focal set. We confirm the subset is
representative of the full 1.6M by verifying that LLaMA's recall
on the 10K subset matches its aggregate recall on the full focal
set within 1.7pp, and report GPT-4o and GPT-5.2
comparisons only on the 10K subset.

\section{Results}\label{sec:results}
We evaluate the pipeline along three dimensions: aggregate recall
on the full focal set (\S\ref{sec:results-overall}), per-segment
recall stratified by focal-property interaction volume
(\S\ref{sec:results-segment}), and coverage of cold-start
properties that IBKNN cannot serve (\S\ref{sec:results-coldstart}).
We then examine how robust the pipeline is to the choice of
query-generator LLM (\S\ref{sec:results-llm-choice}) and the
incremental lift from the downstream LightGBM re-ranker
(\S\ref{sec:results-ranker}). Throughout, IBKNN is the behavioral
baseline and Union is the proposed fusion strategy
(Algorithm~\ref{alg:union}); per-segment and ranker comparisons
use the shared-property protocol from \S\ref{sec:setup}.
\subsection{Overall Performance}\label{sec:results-overall}
Table~\ref{tab:overall} reports $\mathrm{Recall}@K$ for the three
pipeline components -- IBKNN alone, the LLM channel alone, and
Union fusion -- across the full 1.6M focal-property evaluation
set. Union matches or beats IBKNN at every $K$, consistent with the recall-preservation guarantee.\ The standalone LLM channel underperforms IBKNN by a wide margin, as expected for a channel with no access to behavioral signal, but the fused Union output
recovers the full behavioral baseline and adds incremental recall
from the LLM channel's non-overlapping candidates.
\begin{table}[t]
\caption{Overall performance on the full 1.6M focal set.\ Union $\geq$ IBKNN at every $K$ by construction.\ The LLM channel alone
underperforms IBKNN but contributes non-overlapping candidates
that lift Union above IBKNN at every operating point.
\emph{R} = Recall, \emph{H} = mean haversine distance to
retrieved candidates. $^*$The LLM channel produces ${\sim}500$
unique candidates per focal ($Q{=}12$ queries $\times$ top-50
retrievals, after deduplication), so its R@1000 equals R@500.}
\label{tab:overall}
\small
\begin{tabular}{lcccc}
\toprule
 & IBKNN & LLM-only$^*$ & Union & $\Delta$ \\
\midrule
R@100   & 32.02\% & 6.99\%  & 32.26\% & +0.24pp \\
R@300   & 45.13\% & 12.02\% & 45.84\% & +0.71pp \\
R@1000  & 55.08\% & 13.50\% & 56.48\% & +1.40pp \\
\midrule
H@5 (km)  & 5.3 & 7.68 & 5.6 & +0.3 \\
H@10 (km) & 5.7 & 8.99 & 6.1 & +0.4 \\
\bottomrule
\end{tabular}
\end{table}

Union's recall lift over IBKNN grows monotonically with $K$: from
$+0.24$ percentage points at $\mathrm{R}@100$ to $+1.40$
percentage points at $\mathrm{R}@1000$. This pattern is a direct
consequence of Union's append-only design
(Alg.~\ref{alg:union}): at shallow $K$, IBKNN's candidates
fill most or all of the available slots and the LLM channel has
little room to contribute; at larger $K$, IBKNN's list is
exhausted and LLM-contributed candidates begin occupying tail
positions, each one an additional chance to recover a relevant
item. The two channels' candidate pools are highly complementary,
with only 4\% mean overlap per focal property, confirming that
the LLM channel surfaces genuinely different candidates rather
than duplicating IBKNN's behavioral neighbors.

The standalone LLM channel's recall -- roughly one-fifth of
IBKNN's -- underscores its role as a complement, not a
replacement. Its candidate pool is bounded at ${\sim}500$ unique
candidates per focal ($Q{=}12 \times$ top-50, after dedup), so
recall plateaus beyond $K=500$. Union does not share this ceiling
because IBKNN contributes additional candidates in deeper
positions. Union's haversine is modestly higher than IBKNN's
($+0.3$~km at H@5), reflecting the LLM channel's broader
geographic distribution.

\subsection{Per-Segment Analysis}\label{sec:results-segment}
We stratify focal properties into four interaction-volume buckets
by the number of clickstream events each property received during
the training window: \emph{very~low} (1--5 interactions),
\emph{low} (6--20), \emph{medium} (21--100), and \emph{high}
(101+). Cold-start properties (zero interactions) are handled as
a separate coverage category rather than a fifth bucket
(\S\ref{sec:results-coldstart}), because IBKNN returns an empty
candidate list for these properties by construction and
per-property recall is not the right lens for measuring the
improvement. Bucket populations in our focal set are: very\_low
22\%, low 26\%, medium 34\%, high 18\%, with an additional 3.2\%
cold-start properties.

The aggregate numbers in \S\ref{sec:results-overall} obscure a
structural property of the improvement: Union's lift is not
distributed evenly across focal properties but concentrates on
the long tail, precisely where IBKNN is weakest.
Table~\ref{tab:per-segment} reports $\mathrm{Recall}@K$ broken
out by interaction-volume bucket on the shared-property set
(approximately 1.5M properties where both IBKNN and Union have
candidates), providing an apples-to-apples comparison that isolates retrieval
quality from coverage differences.
\begin{table}[t]
\caption{Per-segment $\mathrm{Recall}@K$ on the shared-property
set (approximately 1.5M focal properties), stratified by interaction
volume. Union's lift is inversely proportional to interaction
count: largest on the sparsest bucket and negligible on the
densest. \emph{R} = Recall. 
}
\label{tab:per-segment}
\small
\begin{tabular}{llccc}
\toprule
Bucket & Method & R@100 & R@300 & R@1000 \\
\midrule
\multirow{3}{*}{\shortstack[l]{very\_low\\(1--5)}}
  & IBKNN & 31.48\% & 40.66\% & 44.55\% \\
  & Union & 33.10\% & 45.00\% & 50.80\% \\
  & $\Delta$ & +1.62pp & +4.34pp & +6.25pp \\
\midrule
\multirow{3}{*}{\shortstack[l]{low\\(6--20)}}
  & IBKNN & 32.94\% & 44.39\% & 50.28\% \\
  & Union & 33.49\% & 46.40\% & 53.89\% \\
  & $\Delta$ & +0.55pp & +2.01pp & +3.61pp \\
\midrule
\multirow{3}{*}{\shortstack[l]{medium\\(21--100)}}
  & IBKNN & 35.06\% & 48.52\% & 57.51\% \\
  & Union & 35.19\% & 49.15\% & 59.13\% \\
  & $\Delta$ & +0.13pp & +0.63pp & +1.62pp \\
\midrule
\multirow{3}{*}{\shortstack[l]{high\\(101+)}}
  & IBKNN & 32.69\% & 46.97\% & 58.93\% \\
  & Union & 32.72\% & 47.11\% & 59.44\% \\
  & $\Delta$ & +0.03pp & +0.14pp & +0.51pp \\
\bottomrule
\end{tabular}
\end{table}

The \emph{very\_low} bucket is the headline finding: Union
delivers a $+4.34$ percentage point lift in $\mathrm{R}@300$ over
IBKNN on properties with 1--5 interactions, growing to $+6.25$pp
at $\mathrm{R}@1000$. This is the segment where a marketplace
operator most needs help: these properties generate too little
behavioral signal for IBKNN to build reliable neighborhoods, yet
they are established enough that they should not be treated as
new listings. The \emph{low} bucket (6--20 interactions) shows a
similar pattern at reduced magnitude ($+2.01$pp at
$\mathrm{R}@300$), confirming that the LLM channel's value
extends beyond the extreme tail.

At the other end of the spectrum, the \emph{high} bucket (101+
interactions) shows Union's lift shrinking to $+0.14$pp at
$\mathrm{R}@300$ and $+0.51$pp at $\mathrm{R}@1000$. This is
exactly the behavior predicted by Union's design: on
well-interacted properties, IBKNN already fills $K$ slots with
strong behavioral candidates, leaving the LLM channel little room
to contribute. Early recall ($\mathrm{R}@5$, $\mathrm{R}@10$; not
shown) is identical between IBKNN and Union across all four
buckets on shared properties, empirically confirming the position-preservation property (\S\ref{sec:method-union}).

The monotonic relationship between interaction volume and Union's
lift is not a lucky outcome of tuning. It is a direct structural
consequence of Union fusion's append-only design: the LLM channel
fills only the slots that IBKNN leaves empty, and IBKNN leaves
more slots empty on sparse properties than on dense ones. The
theorems in \S\ref{sec:method-union} predict exactly this
asymmetric lift profile, and the per-segment numbers confirm it
empirically. Union additionally serves cold-start
properties that IBKNN cannot cover at all; their recall is
reported separately in \S\ref{sec:results-coldstart}.

\subsection{Cold-Start Coverage}\label{sec:results-coldstart}
The per-segment recall analysis in \S\ref{sec:results-segment}
deliberately excludes properties that IBKNN cannot serve at all.
This subsection quantifies that excluded population: its size,
its distribution across interaction-volume buckets, and why it
matters for the marketplace-level interpretation of our results.
On the full 1.6M focal set, IBKNN returns an empty candidate list
for roughly 3.2\% of the focal population -- properties with zero
behavioral co-occurrences during the training window, typically
new listings, dormant seasonal inventory, or properties in
emerging markets. Union fusion serves candidates for every one of
these properties, because the LLM channel's eligibility criterion
depends only on metadata parseability, not on interaction history
(\S\ref{sec:method-coldstart}).
\begin{table}[h]
\caption{Cold-start properties served by the LLM channel that
IBKNN cannot cover, broken out by interaction-volume bucket.}
\label{tab:coldstart-dist}
\small
\begin{tabular}{lcc}
\toprule
Bucket & Cold-start properties & \% of bucket \\
\midrule
very\_low (1--5)   & ${\sim}19$K & 5.3\% \\
low (6--20)        & ${\sim}15$K & 3.6\% \\
medium (21--100)   & ${\sim}13$K & 2.3\% \\
high (101+)        & ${\sim}4$K  & 1.3\% \\
\midrule
Total              & ${\sim}51$K & 3.2\% \\
\bottomrule
\end{tabular}
\vspace{-1.0em}
\end{table}
These cold-start properties are not concentrated in the lowest
interaction bucket. As Table~\ref{tab:coldstart-dist} shows,
cold-start properties appear across all four segments, including
roughly 1.3\% of the high bucket. This occurs because the buckets
are defined by \emph{test-period} interaction counts (Sep~9 --
Oct~8), while IBKNN's candidate generation uses
\emph{training-period} co-occurrences (Jun~8 -- Sep~8). A
property can receive interactions during the test window -- and
thus land in a non-zero bucket -- while having zero
co-occurrences during the training window that IBKNN relies on.
The LLM channel covers these properties preemptively, regardless
of when their behavioral signal materializes.

For these cold-start properties, Union's candidates come entirely
from the LLM channel. Note that the aggregate gains in
\S\ref{sec:results-overall} and per-segment gains in
\S\ref{sec:results-segment} are computed on the shared-property
set and \emph{exclude} these cold-start properties -- the true
marketplace-level improvement is larger than those tables suggest.


\subsection{LLM Choice Robustness Under Union Fusion}\label{sec:results-llm-choice}
The LLM channel is an interchangeable pipeline component
(\S\ref{sec:method-querygen}), so a natural question for a
deployment-conscious reader is how much the choice of
query-generator LLM matters to downstream recall. At marketplace
catalog scale, this is not a purely academic question: the rate-limit and cost constraints
of frontier API-based models make a full catalog daily refresh impractical, whereas a self-hosted open-weights model runs the same workload on a single GPU node (\S\ref{sec:method-impl}). We
compare three instruction-tuned LLMs as drop-in query generators,
holding the prompt template, sampling parameters, parser,
validation filter, and evaluation protocol constant across all
three. The only variable is the model identity.

Table~\ref{tab:llm-ablation} reports the comparison across three
fusion strategies: the LLM channel alone (no fusion with IBKNN),
Reciprocal Rank Fusion, and Union fusion. The pattern is striking.
\begin{table}[t]
\caption{LLM choice comparison on the 10K focal subset across
three fusion strategies. The standalone LLM-quality gap collapses
under Union fusion, while geographic precision shows a
fusion-strategy trade-off: the LLM channel alone is geographically
tightest but has the lowest recall; Union maximizes recall at the
cost of a modest haversine increase. All rows computed on the
filtered 10K test denominator for direct comparability across
fusion blocks. \emph{R} = Recall, \emph{H} = mean haversine
distance to retrieved candidates.}
\label{tab:llm-ablation}
\small
\begin{tabular}{lcccc}
\toprule
Fusion & Metric & LLaMA-3B & GPT-4o & GPT-5.2 \\
\midrule
\multirow{5}{*}{Standalone} & R@100     & 7.28\%  & 9.74\%  & 9.42\%  \\
                            & R@300     & 12.42\% & 16.00\% & 16.52\% \\
                            & R@1000    & 13.95\% & 17.65\% & 18.82\% \\
                            & H@5 (km)  & 3.65    & 2.19    & 2.27    \\
                            & H@10 (km) & 4.83    & 2.66    & 4.22    \\
\midrule
\multirow{5}{*}{RRF}        & R@100     & 22.42\% & 23.47\% & 23.34\% \\
                            & R@300     & 34.83\% & 36.05\% & 36.04\% \\
                            & R@1000    & 40.68\% & 41.90\% & 42.06\% \\
                            & H@5 (km)  & 3.80    & 3.17    & 3.12    \\
                            & H@10 (km) & 4.22    & 3.52    & 3.42    \\
\midrule
\multirow{5}{*}{Union}      & R@100     & 33.58\% & 33.66\% & 33.64\% \\
                            & R@300     & 47.50\% & 47.72\% & 47.74\% \\
                            & R@1000    & 58.57\% & 58.95\% & 59.04\% \\
                            & H@5 (km)  & 5.35    & 5.24    & 5.24    \\
                            & H@10 (km) & 5.79    & 5.68    & 5.69    \\
\bottomrule
\end{tabular}
\end{table}

Used standalone, the LLM channel shows a wide quality gap:
GPT-4o outperforms LLaMA-3.2-3B by 27--46\% in $\mathrm{Recall}@K$
and GPT-5.2 by 25--35\%; haversine distance improves by 40--45\%
for GPT-4o. Under RRF the gap partially closes but frontier models
retain a 3--10\% edge. Under Union it collapses: all three models
land within 0.4pp at every $K$ from 100 to 1000 (a 38$\times$
reduction in the quality gap versus standalone), with haversine
tightening to within $\sim$2\%. The haversine rows also show a
fusion-strategy trade-off: the LLM channel alone is geographically
tightest (H@5: 2.2--3.7~km), RRF intermediate, Union widest
(5.2--5.4~km) because IBKNN dominates the head positions. This
collapse is exactly what the position-preservation property
(\S\ref{sec:method-union}) predicts: IBKNN occupies the top
positions where most recall mass lives, leaving the LLM's quality
ceiling little room to express itself in the fused output.


The operational implication is that a marketplace operator can
deploy the cheapest, fastest query generator available without
materially compromising recall, provided Union fusion is the
channel-combination strategy -- at 1.6M-property scale, daily
refresh with a frontier API-based LLM is infeasible under current
rate limits (\S\ref{sec:method-impl}), while a self-hosted 3B
model completes in hours on a single GPU node.

\subsection{LightGBM Re-ranker}\label{sec:results-ranker}
The Union-fused top-1000 pool is re-scored by the downstream
LightGBM ranker described in \S\ref{sec:method-ranker}. The
role of this stage is to add complementary incremental value on top of the
fused pool by exploiting features (geographic distance, property
similarity, demand profiles) that the upstream channels do not
directly model. Table~\ref{tab:ranker} reports the
best-performing ranker variant's lift over the pre-ranker Union
fused pool.
\begin{table}[t]
\caption{Downstream LightGBM ranker performance over the
Union-fused top-1000 pool on the 1.6M focal set. The ranker
delivers a small but consistent lift at $K \geq 10$, with peak
absolute lift in the mid-$K$ range, and converges to Union's
R@1000 because re-ranking cannot expand the candidate pool.}
\label{tab:ranker}
\small
\begin{tabular}{lcccc}
\toprule
 & R@10 & R@50 & R@100 & R@300 \\
\midrule
Union (pre-ranker) & 9.91\%  & 23.97\% & 32.26\% & 45.84\% \\
Union + LightGBM   & 10.08\% & 25.26\% & 33.95\% & 47.51\% \\
$\Delta$           & +0.17pp & +1.29pp & +1.69pp & +1.67pp \\
\bottomrule
\end{tabular}
\end{table}

The ranker improves Union's recall at every $K$ from 10 through
500, with the lift staying above $+1.2$ percentage points across
the entire $K = 50$--$500$ range and peaking near the middle. At
$K=1000$ the two systems are identical because re-ranking
redistributes candidates within the pool but cannot add new ones
-- the candidate generator's recall is the ceiling, and the
ranker's role is to push relevant candidates into earlier
positions within that ceiling. The mid-$K$ range where the lift
concentrates corresponds to the operating points most relevant
for downstream surface ranking, where the top hundreds of
candidates are re-scored by additional business-logic models. We
also note that the ranker introduces a small geographic-precision
cost relative to Union (mean haversine at $K=5$ rises from
5.60~km to 7.27~km), reflecting that property-level demand and
similarity features pull the model slightly away from raw
geographic proximity. The candidate-set guarantees from
\S\ref{sec:method-union} are unaffected by this stage: the ranker
is free to reorder anything in the fused pool, but the pool
itself preserves IBKNN's coverage and ordering by construction.

\section{Discussion}\label{sec:discussion}
\paragraph{Deployment implications for two-sided marketplaces.}
The results in \S\ref{sec:results} support a specific deployment
pattern for LLM-based candidate generation in marketplace
settings: use a mature behavioral channel (IBKNN) as the primary
signal, complement it with an LLM channel via Union fusion, and
re-rank the fused pool with a learned model. This stack delivers
three properties that marketplace operators need simultaneously:
(i) no regression on well-served properties (recall-preservation guarantee), 
(ii) incremental recall on the long tail where behavioral signal is sparse ($+4.34$pp at $\mathrm{R}@300$ on the very\_low segment), and (iii) complete
coverage of cold-start properties that behavioral methods cannot
reach. The Union fusion guarantee is especially important in marketplace contexts where per-property regression translates directly into host-side exposure loss -- a
concern that average-recall improvements do not address.


\paragraph{Limitations.} Several limitations merit
acknowledgment. First, our evaluation uses offline recall against
held-out clickstream co-occurrences; online metrics such as
click-through rate and booking conversion may tell a different
story, particularly for cold-start properties where the
ground-truth signal is inherently sparse. Second, the LLM
channel's geographic precision is lower than IBKNN's
(\S\ref{sec:results-llm-choice}), reflecting that
metadata-derived queries retrieve from a broader geographic
distribution than IBKNN's 30~km neighborhood; surfaces where
geographic adjacency is paramount may require tighter location
filtering. Third, the pipeline currently generates queries once
and serves them statically; it does not incorporate user-level
personalization or session context, which limits its ability to
adapt recommendations to individual traveler intent. Fourth, the
LLM channel's candidate pool is bounded by its retrieval
configuration ($Q = 12$ queries $\times$ top-50 per query
$\approx 500$ unique candidates); expanding retrieval depth or
adding sparse retrieval channels could raise this ceiling.

\paragraph{Future work.} Three directions follow naturally. First,
online A/B testing of the Union-fused pipeline against the IBKNN
baseline would validate whether the offline recall gains
translate to user-facing engagement improvements. Second,
iterative query refinement -- feeding retrieved candidates back
to the LLM to generate improved queries -- could lift retrieval
quality for properties where first-pass queries miss relevant
facets.\ Third, prompt-variant optimization (several facet-diversity
variants remain unevaluated) could further improve candidate
quality without requiring model fine-tuning.

\section{Conclusion}\label{sec:conclusion}
We presented a training-free, LLM-based candidate generation
pipeline for item-to-item recommendation in vacation rental
marketplaces, evaluated at scale on 1.6M focal properties over an
11.7M-property catalog. The pipeline uses an off-the-shelf LLM to
synthesize diverse search queries from property metadata, retrieves
candidates via dense search, and merges them with a behavioral
baseline through Union fusion, a channel-combination strategy
that preserves the behavioral channel's ordering by construction.

Four findings emerge from the evaluation. First, Union fusion
improves recall over IBKNN at every $K$ on shared properties,
with the lift growing from $+0.24$pp at $\mathrm{R}@100$ to
$+1.40$pp at $\mathrm{R}@1000$, driven by only 4\% candidate
overlap between the two channels. Second, the improvement
concentrates on the long tail: $+4.34$pp at $\mathrm{R}@300$ on
properties with 1--5 interactions, diminishing to $+0.14$pp on
the densest segment, exactly the asymmetric profile predicted
by Union's append-only design. Third, the LLM channel extends
coverage to the cold-start properties that IBKNN cannot serve,
a structural consequence of metadata-only eligibility rather than
a trained behavior. Fourth, the choice of query generator LLM matters far less under Union than it appears in isolation. Used standalone, a 3B open-weights model trails by 27--46\% in recall. Under Union fusion, the gap collapses to under 1\%. This makes self-hosted small-model deployment a defensible choice at marketplace catalog scale.

A downstream LightGBM re-ranker adds incremental lift over the
Union-fused pool ($+1.69$pp at $\mathrm{R}@100$), confirming
that the fused candidate set is amenable to learned re-scoring. Our proposed solution addresses the supply-side exposure imbalance
that motivates this work: long-tail and cold-start properties receive candidates without regressing well-served, high interaction properties.






\bibliographystyle{ACM-Reference-Format}
\bibliography{sample-base}

\end{document}